\newtheorem{theorem}{Theorem} 
\newtheorem{lemma}{Lemma}
\title{Predicting the Performance of Graph
Convolutional Networks with Spectral Properties of
the Graph Laplacian}
\author {
    Shalima Binta Manir\textsuperscript{\rm 1},
    Tim Oates\textsuperscript{\rm 2}
}
\begin{document}
\nocopyright

\maketitle

\begin{abstract}
A common observation in the Graph Convolutional Network (GCN) literature is that stacking GCN layers may or may not result in better performance on tasks like node classification and edge prediction. We have found empirically that a graph’s algebraic connectivity, which is known as the Fiedler value, is a good predictor of GCN performance. Intuitively, graphs with similar Fiedler values have analogous structural properties, suggesting that the same filters and hyperparameters may yield similar results when used with GCNs, and that transfer learning may be more effective between graphs with similar algebraic connectivity. We explore this theoretically and empirically with experiments on synthetic and real graph data, including the Cora, CiteSeer and Polblogs datasets.  We explore multiple ways of aggregating the Fiedler value for connected components in the graphs to arrive at a value for the entire graph, and show that it can be used to predict GCN performance. We also present theoretical arguments as to why the Fiedler value is a good predictor.
\end{abstract}

\section{Introduction}

Many domains are naturally described by graphs, where nodes correspond to entities with properties and edges correspond to relationships among entities. Beyond the more obvious naturally occurring graphs, like social networks, citation networks, and molecular structures, one can represent text as an attributed graph (a dependency parse tree) and visual scenes as objects with properties that stand in spatial relationships to one another \cite{velivckovic2023everything}.
Common inferential tasks performed on graphs include node prediction (e.g., there should be an object on which the cup is resting), link prediction (e.g., Alice probably knows Bob), and graph classification (e.g., that molecule is highly toxic).

Graph Convolutional Networks (GCNs) \cite{zhang2019graph} have emerged as a powerful tool for representation learning from graph data, with successful applications in physical systems modeling, molecular chemistry, biology, recommendation system design, traffic prediction, and  other practical domains \cite{zhou2020graph}.  There are many forms of GCNs, but the basic idea is message passing.  If each node initially contains a feature vector (though it is possible to have features on edges as well), those vectors are transmitted as messages to immediate neighbors and then all incoming messages are combined at a node to produce a new vector \cite{zhang2019graph}.  Adding layers to a GCN allows messages to spread farther in the graph.  With $k$ layers, each node's influence reaches all other nodes within $k$ hops.  The resulting embeddings support node, edge, and graph classification tasks \cite{xu2023comprehensive}.

Despite their power and popularity, challenges remain with GCNs.  They typically assume static graphs \cite{wu2022improving} and have difficulty scaling to large graphs \cite{mostafa2020permutohedral}.  More relevant for this paper, unlike other forms of deep networks where increasing depth often leads to better representations, increasing depth with GCNs often has the opposite effect.  For example, as the reach of nodes' messages increases with GCN depth, the problem of over-smoothing arises, where the embeddings of all of the nodes become indistinguishable \cite{sun2022over}.  Similarly, over-squashing occurs as information from distant parts of the graph is squeezed (squashed) into a finite sized vector \cite{giraldo2022understanding}.  Adding layers also exacerbates problems with vanishing gradients \cite{li2019deepgcns}.

Collectively, this means that choosing the right number of layers for GCNs is a crucial design step, for which there is little guidance in the literature.  In this paper we show that a graph's algebraic connectivity (known as the Fiedler value or Fiedler eigenvalue), which is the first non-zero eigenvalue of the graph's Laplacian matrix, can inform the choice of the number of GCN \cite{kipf2016semi} layers. 

Fisser et al.\cite{fesser2025performance} introduce the Fiedler \emph{singular} value of a gradient-derived graph to diagnose over-squashing by analyzing gradient flow bottlenecks in trained GNNs. In contrast, our work employs the Fiedler \emph{eigenvalue} of the input graph Laplacian to derive a theoretical lower bound on feature variation resulting from Laplacian smoothing. While both approaches leverage spectral tools, our method is designed to provide insight into GCN behavior \emph{prior} to training, enabling informed decisions about layer depth. This stands in contrast to Fisser’s \emph{post hoc} diagnostic, which is applicable only after gradient-based learning has occurred.

Our contributions are as follows:
\begin{itemize}
    \item Introduction of algebraic connectivity of graphs as a way of characterizing how well GCNs will perform as a function of the number of convolutional layers.
    \item Empirical demonstration of a ``sweet spot'' for the algebraic connectivity that indicates good GCN performance on node classification tasks on both synthetic and real-world data.
    \item Theoretical arguments about why algebraic connectivity is a good predictor of GCN performance.
\end{itemize}
The remainder of this paper is organized as follows.  The next section introduces concepts and notation relevant to graphs, algebraic connectivity, and GCNs.  Following that is a theoretical exploration of algebraic connectivity, and then a series of experiments in which the algebraic connectivity of graphs is used to predict GCN performance. 

\section{Preliminaries}
Let \( G(V,E) \) be a graph \( G \) where the set of vertices is \( V \) and the set of edges is \( E \). The adjacency matrix of \( G \) is \( A \in \{0,1\}^{n \times n} \). The degree matrix \( D \) is defined as:

\[
D_{i,j} =
\begin{cases}
\text{degree}(v_i), & \text{if } i = j; \\
0, & \text{otherwise.}
\end{cases}
\]

The graph Laplacian matrix \( L \) is defined as \(L= D-A\) \cite{grone1990laplacian}, which has an important role in defining convolution in GCNs \cite{daigavane2021understanding}. 

For any square matrix $A$, it is the case that $Av= \lambda v$ where $v$ is an eigenvector of the matrix and $\lambda$ is the corresponding eigenvalue. Every square matrix has at least one eigenvalue \cite{mirchev2016beyond}.  In this work, we will be concerned with the eigenvalues of graph Laplacian matrices.

For any connected graph $G$ with $n$ vertices the Laplacian matrix of $G$ is symmetric and has real eigenvalues $0=\lambda_1\le \lambda_2\le \lambda_3....\le \lambda_n$.
The {\em Fiedler value} is the first nonzero eigenvalue of the Laplacian matrix $L$, i.e., $\lambda_2$ \cite{fiedler1989laplacian}. This value is associated with connectivity, among a few other graph-theoretic properties.

For a graph with $k$ disconnected components, the Laplacian matrix can be represented as a block diagonal matrix where each block corresponds to a single connected component of the graph.  Let $L_i$ denote the graph Laplacian of the $i^{th}$ component.
The spectrum of $L$ is defined by the union of the spectra of the $L_i$ \cite{von2007tutorial}.
Each $L_i$ contains eigenvalue 0 with multiplicity 1, with a corresponding eigenvector ${\bf 1}$, so the number of 0 eigenvalues in $L$ equals the number of connected components in $G$.
For any undirected graph, it is the case that algebraic connectivity is bounded from above by the vertex connectivity, or the number of nodes that must be removed to disconnect the graph \cite{barriere2013fiedler}.

Convolutional neural networks (CNNs) applied to, for example, images use filters that compute weighted sums of values in a (typically) square region of adjacent pixels. GCNs use polynomial filters applied to the graph Laplacian matrix $L$. An example of a polynomial filter is shown below:\newline
\( S(L)=w_0I_n+w_1L+w_2L^2+...+w_kL^K =\sum\limits^{k}_{i=0}{w^i L^i} \)
\\
Here $I$ is the identity matrix, the values of $w_i$ are the learned parameters of the polynomial filter, and $L^i$ denotes raising $L$ to the power $i$ or multiplying $i$ copies of $L$ together \cite{daigavane2021understanding}. There are several types of polynomial filters, with the choice being domain or task specific \cite{gama2020graphs}.

One such filter is the Chebyshev polynomial, which follows De Moivre's formula, meaning it can be generated recursively as follows: $P_k(x)= 2x P_{k-1}(x)-P_{k-2}(x)$ where $P_0(x)=1$ and
$P_1(x)=x$ \cite{defferrard2016convolutional}. They act as low pass filters in graph neural networks \cite{he2022convolutional}.

Among other Graph Neural Networks, GCNs use Chebyshev polynomials for convolution operations \cite{kipf2016semi}. GCN convolution layers aggregate information from the local neighborhood of nodes in a graph using a 1st-order approximation of the Chebyshev polynomial, aggregating information from 1-hop neighbors. For the layer-wise propagation rule, see Appendix A.2.

Although GCNs are an effective type of graph neural network for classification and node/link prediction, they suffer from over-smoothing \cite{elinas2022addressing}. Adding multiple convolution layers causes node features to converge and the resulting model is unable to distinguish among them \cite{rusch2023survey}.

\section{Algebraic Connectivity and GCNs}
This section explores the connection between a graph's Fiedler value (algebraic connectivity) and the operations used by GCNs, with the goal of understanding what the former may indicate about the latter.

\subsection{Estimating the Topological 
Distance of a Graph}

GCN models perform a type of Laplacian smoothing that combines local vertex data with graph topology in convolution layers. Vertices that are related topologically are likely to belong to the same class and smoothing these features increases their similarity, thereby aiding the classification process for GCNs \cite{li2018deeper}.  When nodes from different classes have small topological distances problems with over-smoothing are more likely to arise with deeper GCNs \cite{chen2020measuring}. 
 
The {\em mean distance} is a measure of the topological distance of a graph. It is the average of all distances between distinct vertices of a graph G. Calculating the mean distance of a large and complex graph is computationally expensive, especially for real-world data. Approximating a bound on mean distance is often more practical. Topological distance plays a significant role in GCN performance, and the Fiedler value can be used to estimate the mean distance of a graph. It imposes both upper and lower bounds for a graph $G$ of order $n$ \cite{mohar1991eigenvalues}.

Let $\overline{m}$ denote the mean distance of a graph $G$ and let $\lambda_2$ denote the Fiedler value of the graph. Then a lower bound on the mean distance is:
\[
(n-1)\overline{m}(G) \geq \frac{2}{\lambda_2} + \frac{n-2}{2}.
\]
and an upper bound on the mean distance is:
\[
\overline{m}(G) \leq \frac{n}{n-1} \left(\left\lceil \frac{\Delta + \lambda_2}{4\lambda_2} \ln(n-1)\right\rceil + \frac{1}{2} \right).
\]
Here $\Delta$ is the maximum node degree of graph $G$.

From these bounds we can see that the Fiedler value and the mean distance of a graph have an inverse relationship. Higher mean distances mean, on average, that nodes are less closely connected and lower mean distances indicate closer connections. If the algebraic connectivity is high then the mean distance will be low and the graph will be more susceptible to over-smoothing as layers are added.  High algebraic connectivity is also associated with expander-like graphs (see Appendix A.3).

A similar analysis (see the Appendix A.4) shows that the Fiedler value and the diameter of a graph are inversely related. If we apply too few GCN layers to a \emph{large}-diameter graph, under‑reaching can occur. Conversely, \emph{large}-diameter graphs are also more susceptible to over‑squashing, because exponentially many long‑range messages must be compressed into fixed‑size vectors as depth grows; small‑diameter graphs, in contrast, are more likely to encounter over‑smoothing. High range further exacerbates over‑squashing \cite{alon2020bottleneck}. If a graph’s diameter is large or small, as computed from the Fiedler value, we can adjust GCN depth accordingly.  

\subsection{Possible Optimum Feature Distance}

It should be clear that the degree to which node features converge as a function of the number of rounds of graph convolution is central to the performance of GCNs.  This is complicated by the fact that we'd like node features to converge differenly depending on the classes to which the nodes belong.  What follows is a series of lemmas (with citations when they come from prior work) and a proposed theorem that will shed light on what the Fiedler value tells us about distances between feature vectors after Laplacian smoothing.
we now establish a series of lemmas and conclude with a key theorem that connects the spectral properties of the graph, specifically the Fiedler value, to the distances between node features after Laplacian smoothing. The lemmas progressively build intuition, clarify important concepts like Dirichlet energy and feature distances, and lay the theoretical groundwork required for the main result.
We begin with Lemma~\ref{lem:dirichlet}, a foundational result linking the Dirichlet energy to the total feature distance, providing a clear geometric interpretation of the Laplacian smoothing process.
\begin{lemma}
\label{lem:dirichlet}
Let \( G = (V, E) \) be an undirected graph with degree matrix \( D \), adjacency matrix \( A \), and Laplacian \( L = D - A \). For each vertex \( i \), let \( v_i \in \mathbb{R}^m \) denote its feature vector, and let the feature matrix be \( V = [v_1^{\top} \;\cdots\; v_n^{\top}]^{\top} \in \mathbb{R}^{n \times m} \). Then,
\[
\operatorname{Tr}(V^{\top} L V) = \sum_{\{i, j\} \in E} \|v_i - v_j\|_2^2.
\] 
\end{lemma}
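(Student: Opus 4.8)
The plan is to reduce the matrix identity to a scalar (single-column) identity and then verify the latter by a direct expansion of the quadratic form. First I would observe that the trace decomposes over the columns of the feature matrix: writing \( V = [f^{(1)} \;\cdots\; f^{(m)}] \) with each \( f^{(k)} \in \mathbb{R}^n \), linearity of the trace gives
\[
\operatorname{Tr}(V^{\top} L V) = \sum_{k=1}^{m} (f^{(k)})^{\top} L\, f^{(k)}.
\]
Simultaneously, the squared Euclidean distance splits coordinatewise as \( \|v_i - v_j\|_2^2 = \sum_{k=1}^{m} (f^{(k)}_i - f^{(k)}_j)^2 \). Hence it suffices to prove the scalar version \( f^{\top} L f = \sum_{\{i,j\}\in E}(f_i - f_j)^2 \) for an arbitrary \( f \in \mathbb{R}^n \) and then sum over \( k \).

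For the scalar identity, I would substitute \( L = D - A \) and expand the quadratic form into a degree term and an adjacency term, \( f^{\top} L f = \sum_i d_i f_i^2 - \sum_{i,j} A_{ij} f_i f_j \). The key step is to rewrite the degree term using \( d_i = \sum_j A_{ij} \), so that \( \sum_i d_i f_i^2 = \sum_{i,j} A_{ij} f_i^2 \), and then symmetrize. Because \( G \) is undirected, \( A \) is symmetric, so \( \sum_{i,j} A_{ij} f_i^2 = \tfrac{1}{2}\sum_{i,j} A_{ij}(f_i^2 + f_j^2) \). Combining with the cross term yields
\[
f^{\top} L f = \tfrac{1}{2} \sum_{i,j} A_{ij}(f_i - f_j)^2.
\]

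The last step is to pass from the ordered double sum over all index pairs to the unordered sum over edges. Each edge \( \{i,j\} \in E \) is counted twice in the double sum (as \( (i,j) \) and \( (j,i) \)), while the diagonal terms vanish since \( f_i - f_i = 0 \); the factor of \( \tfrac{1}{2} \) exactly cancels the double counting, giving \( \sum_{\{i,j\}\in E}(f_i - f_j)^2 \). Summing over the \( m \) feature columns and recombining the coordinatewise distances into \( \|v_i - v_j\|_2^2 \) completes the argument. I do not anticipate a genuine obstacle here, since this is the standard Dirichlet-energy identity; the only care required is the bookkeeping, namely keeping the factor of \( \tfrac{1}{2} \) consistent with the ordered-versus-unordered edge convention and confirming that any self-loops would drop out automatically.
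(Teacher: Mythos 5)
Your proof is correct and follows essentially the same route as the paper's: expand \(L = D - A\), rewrite the degree term \(\sum_i d_i(\cdot)\) as a sum over edges using \(d_i = \sum_j A_{ij}\), and combine with the adjacency cross term to recover \(\sum_{\{i,j\}\in E}\|v_i - v_j\|_2^2\). The only cosmetic difference is that you first reduce to a single scalar column \(f^{\top} L f\) and sum over the \(m\) coordinates, whereas the paper carries out the identical expansion directly on the vector-valued features via \(\operatorname{Tr}(V^{\top} D V) - \operatorname{Tr}(V^{\top} A V)\).
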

See Appendix A.5 for the proof.

\textbf{Remark}
Lemma~\ref{lem:dirichlet} can be viewed as a straightforward vector–valued generalisation of Marsden’s result \cite[Lemma 3.6]{marsden2013eigenvalues}


\paragraph{Interpretation.}
The scalar \(\operatorname{Tr}(V^{\top}LV)\) is often called the
\emph{Dirichlet (or smoothness) energy} of the graph signal~\(V\)\cite{digio2023energies}.
Lemma~\ref{lem:dirichlet} shows that it is nothing more than the
aggregate squared Euclidean distance between the features of adjacent
vertices, thus providing a direct \emph{feature--distance}
interpretation.
Note, however, that this quantity \emph{measures} variation—it is not
itself a graph-convolution operation; spectral filters and GCN layers
are motivated by \emph{minimising} this very score.


\begin{lemma}
\label{lem:first-eigenpair}
Let \( G = (V, E) \) be an undirected graph with \( |V| = n \), and let \( L \in \mathbb{R}^{n \times n} \) denote its (combinatorial) Laplacian matrix. Then
\[
  \lambda_{1} = 0,
  \qquad
  u_{1} = \frac{1}{\sqrt{n}} \mathbf{1},
\]
where \( \mathbf{1} = [1,\dots,1]^{\top} \in \mathbb{R}^n \), and \( L u_1 = \lambda_1 u_1 \). This is a standard result in spectral graph theory \cite[Sec.~1.2]{Chung1997}.
\end{lemma}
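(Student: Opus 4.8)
The plan is to verify two facts: that the all-ones vector lies in the kernel of $L$, and that $L$ is positive semidefinite, so that $0$ is necessarily the \emph{smallest} eigenvalue under the ordering $\lambda_1 \le \lambda_2 \le \cdots \le \lambda_n$. These two facts together pin down $\lambda_1$ and $u_1$.

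First I would show $L\mathbf{1} = \mathbf{0}$ by a direct row-sum computation. Writing $L = D - A$, the $i$-th entry of $L\mathbf{1}$ is $\sum_j L_{ij} = D_{ii} - \sum_j A_{ij} = \deg(v_i) - \deg(v_i) = 0$, since the diagonal of $D$ records exactly the degree and the $i$-th row of $A$ sums to $\deg(v_i)$. Hence $L\mathbf{1} = 0\cdot\mathbf{1}$, so $\mathbf{1}$ is an eigenvector with eigenvalue $0$. Normalizing, $\|\mathbf{1}\|_2 = \sqrt{n}$, so $u_1 = \frac{1}{\sqrt{n}}\mathbf{1}$ is the associated unit eigenvector.

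Next I would argue that $0$ is in fact the \emph{least} eigenvalue, which is the only step with any content. For this I would invoke the scalar ($m = 1$) case of Lemma~\ref{lem:dirichlet}: for any $x \in \mathbb{R}^n$, $x^{\top} L x = \sum_{\{i,j\}\in E}(x_i - x_j)^2 \ge 0$, so $L$ is positive semidefinite and all its eigenvalues are nonnegative. Since $L$ is real and symmetric its spectrum is real, and combining nonnegativity of every eigenvalue with the exhibited eigenvalue $0$ forces $\lambda_1 = 0$ to be the minimum, establishing both $\lambda_1 = 0$ and $u_1 = \frac{1}{\sqrt{n}}\mathbf{1}$ as claimed.

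There is essentially no serious obstacle here; the only point requiring care is the ordering claim—that the produced eigenvalue $0$ is the least rather than merely \emph{an} eigenvalue—which is precisely where positive semidefiniteness (and hence Lemma~\ref{lem:dirichlet}) enters. A secondary subtlety worth noting is that uniqueness of $u_1$ up to sign, equivalently that $0$ is a \emph{simple} eigenvalue, holds only when $G$ is connected; since the statement merely presents one canonical unit eigenvector, I would not need to establish simplicity to prove the lemma as phrased.
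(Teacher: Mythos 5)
Your proof is correct and its core is the same as the paper's: the appendix proof (A.6) likewise observes that each row of $L = D - A$ sums to zero, concludes $L\mathbf{1} = 0$, and normalizes to get $u_1 = \tfrac{1}{\sqrt{n}}\mathbf{1}$. The one place you go beyond the paper is the step you correctly flag as "the only step with any content": showing that $0$ is the \emph{least} eigenvalue, not merely \emph{an} eigenvalue. The paper's proof silently elides this, jumping from "$0$ is an eigenvalue with eigenvector $\mathbf{1}$" to "$\lambda_1 = 0$"; your invocation of positive semidefiniteness via the scalar case of Lemma~\ref{lem:dirichlet} ($x^{\top}Lx = \sum_{\{i,j\}\in E}(x_i - x_j)^2 \ge 0$) supplies exactly the missing justification, so your argument is, if anything, more complete than the paper's. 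Your closing remark about simplicity of the zero eigenvalue requiring connectivity is also apt and consistent with how the lemma is phrased.
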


See Appendix A.6 for the proof.

\begin{lemma}
\label{lem:no-constant-mode}
Let \(G=(V,E)\) be a graph with \(n\) vertices and Laplacian
\(L\in\mathbb{R}^{n\times n}\).
Let \(v_i\in\mathbb{R}^{m}\) be the feature vector of vertex \(i\), and define
\(
  \bar V=[\,v_1^{\!\top}\;\dots\;v_n^{\!\top}]^{\top}
  \in\mathbb{R}^{n\times m}.
\)
Expand \(\bar V\) in the orthonormal eigenbasis
\(\{u_1,\dots,u_n\}\subset\mathbb{R}^{n}\) of \(L\):
\[
   \bar V
   \;=\;
   \sum_{i=1}^{n} u_i\,w_i^{\!\top},
   \qquad
   w_i\in\mathbb{R}^{m}.
   \tag{$\ast$}
\]
If the features are \emph{centered}, i.e.,
\(\sum_{i=1}^{n} v_i = 0_m\),
then \(w_{1}=0_m\), and hence
\[
   \bar V
   \;=\;
   \sum_{i=2}^{n} u_i\,w_i^{\!\top}.
\] (See~\cite[Chapter~1]{Chung1997} for background on Laplacian eigenbases and orthogonality.)
\end{lemma}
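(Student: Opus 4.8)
The plan is to project the expansion $(\ast)$ onto the first eigenvector and show the resulting coefficient must vanish under centering. The only structural facts I need are the orthonormality of $\{u_1,\dots,u_n\}$ (guaranteed because $L$ is real symmetric, so the spectral theorem supplies an orthonormal eigenbasis) together with the explicit form of $u_1$ from Lemma~\ref{lem:first-eigenpair}.

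First I would extract a closed form for each coefficient $w_j$. Left-multiplying $(\ast)$ by $u_j^{\top}$ and using $u_j^{\top}u_i=\delta_{ij}$ gives $u_j^{\top}\bar V=\sum_{i}(u_j^{\top}u_i)\,w_i^{\top}=w_j^{\top}$, so that $w_j^{\top}=u_j^{\top}\bar V$ for every $j$. This is just the standard orthogonal-projection formula applied column-by-column to the feature matrix; the only thing to track carefully is the convention that each $u_i w_i^{\top}$ is an $n\times m$ outer product and that $\bar V$ carries the $v_i^{\top}$ as its rows.

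Next I would specialize to $j=1$. By Lemma~\ref{lem:first-eigenpair}, $u_1=\tfrac{1}{\sqrt{n}}\mathbf{1}$, hence $w_1^{\top}=u_1^{\top}\bar V=\tfrac{1}{\sqrt{n}}\,\mathbf{1}^{\top}\bar V$. The key observation is that $\mathbf{1}^{\top}\bar V$ is exactly the column-wise sum of $\bar V$, i.e.\ the row vector $\sum_{i=1}^{n}v_i^{\top}=\bigl(\sum_{i=1}^{n}v_i\bigr)^{\top}$. The centering hypothesis $\sum_{i}v_i=0_m$ then forces $w_1^{\top}=0_m^{\top}$, that is $w_1=0_m$. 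Substituting back into $(\ast)$ removes the $i=1$ term and yields $\bar V=\sum_{i=2}^{n}u_i w_i^{\top}$, as claimed.

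I do not anticipate a genuine obstacle: the result is a one-line consequence of orthogonality once $u_1$ is known, and the centering assumption is precisely what annihilates the constant (zero-frequency) mode. The only place to take care is bookkeeping—keeping the $n$- and $m$-dimensional indices separate and not conflating the outer-product expansion of the $n\times m$ signal with an ordinary eigendecomposition of a square matrix.
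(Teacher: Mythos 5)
Your proof is correct and follows essentially the same route as the paper's: project the expansion onto the first eigenvector $u_1=\tfrac{1}{\sqrt{n}}\mathbf{1}$ using orthonormality, observe that $\mathbf{1}^{\top}\bar V=\bigl(\sum_i v_i\bigr)^{\top}$, and conclude $w_1=0_m$ from the centering hypothesis. The only cosmetic difference is that you first derive the general projection formula $w_j^{\top}=u_j^{\top}\bar V$ before specializing to $j=1$, whereas the paper multiplies by $\mathbf{1}^{\top}$ directly.
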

Proof is provided in Appendix A.7.
\begin{lemma}
\label{lem:frobenius}
\textit{Let \(G\) be a graph with \(n\) vertices and Laplacian
\(L\in\mathbb{R}^{n\times n}\).
Stack the vertex features \(v_i\in\mathbb{R}^{m}\) row-wise in the
matrix \(\bar V=[\,v_{1}^{\top}\dots v_{n}^{\top}]^{\top}
\in\mathbb{R}^{n\times m}\).
Suppose that}
\[
   \bar V
   \;=\;
   \sum_{i=1}^{n} \bar u_i\,w_i^{\!\top},
   \qquad
   w_i\in\mathbb{R}^{m},
\]
\textit{where \(\{\bar u_i\}_{i=1}^{n}\subset\mathbb{R}^{n}\) is an
orthonormal eigenbasis of \(L\).
If the features are centered,
\(\sum_{i=1}^{n} v_i = 0_m\),
then}
\[
  \sum_{i=1}^{n}\|v_i\|_2^{2}
  \;=\;
  \sum_{i=2}^{n}\|w_i\|_2^{2}.
\]
\end{lemma}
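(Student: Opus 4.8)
The plan is to recognize the left-hand side as the squared Frobenius norm of \(\bar V\) and then to evaluate that norm through the orthonormal expansion, so that a Parseval-type identity collapses everything to the coefficient norms. First I would observe that the left-hand side is precisely \(\|\bar V\|_F^2\), since
\[
  \|\bar V\|_F^{2}
  \;=\;
  \sum_{i,k}(\bar V)_{ik}^{2}
  \;=\;
  \sum_{i=1}^{n}\|v_i\|_2^{2},
\]
and that this Frobenius norm can be written as \(\|\bar V\|_F^{2}=\operatorname{Tr}(\bar V^{\top}\bar V)\). This reframing converts the statement into a claim about how the trace behaves under the eigenbasis expansion.

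Next I would substitute the expansion into the trace. Writing \(\bar V=\sum_{i} \bar u_i w_i^{\top}\) and hence \(\bar V^{\top}=\sum_{j} w_j \bar u_j^{\top}\), the product is
\[
  \bar V^{\top}\bar V
  \;=\;
  \sum_{i,j} w_j\,(\bar u_j^{\top}\bar u_i)\,w_i^{\top}.
\]
Because \(\{\bar u_i\}\) is an \emph{orthonormal} eigenbasis, the scalar \(\bar u_j^{\top}\bar u_i=\delta_{ij}\) annihilates every cross term, leaving \(\bar V^{\top}\bar V=\sum_{i} w_i w_i^{\top}\). Taking the trace and using \(\operatorname{Tr}(w_i w_i^{\top})=\|w_i\|_2^{2}\) yields the vector-valued Parseval identity
\[
  \sum_{i=1}^{n}\|v_i\|_2^{2}
  \;=\;
  \sum_{i=1}^{n}\|w_i\|_2^{2}.
\]

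Finally I would invoke the centering hypothesis to remove the first term. By Lemma~\ref{lem:no-constant-mode}, the condition \(\sum_{i} v_i = 0_m\) forces the constant-mode coefficient \(w_1=0_m\), so \(\|w_1\|_2^{2}=0\) and the \(i=1\) summand drops out, giving \(\sum_{i=1}^{n}\|v_i\|_2^{2}=\sum_{i=2}^{n}\|w_i\|_2^{2}\) as claimed. I do not anticipate a genuine obstacle here, since this is essentially the matrix form of Parseval's theorem; the only place demanding care is the bookkeeping in the product \(\bar V^{\top}\bar V\) under the row-wise stacking convention—tracking which factor carries \(w_i\) versus \(w_i^{\top}\)—and confirming that it is the \emph{orthonormality} (not merely orthogonality) of the \(\bar u_i\) that produces clean \(\delta_{ij}\) factors with no residual scaling.
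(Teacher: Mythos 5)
Your proposal is correct and follows essentially the same route as the paper's Appendix A.8 proof: identify $\sum_i\|v_i\|_2^2$ with $\operatorname{Tr}(\bar V^{\top}\bar V)$, expand in the orthonormal eigenbasis so that $\delta_{ij}$ kills the cross terms, and use the centering condition to eliminate $w_1$. The only cosmetic difference is that you cite Lemma~\ref{lem:no-constant-mode} for $w_1=0_m$ while the paper re-derives it inline via $\mathbf 1^{\top}\bar V=0$.
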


See Appendix A.8 for the proof.

\textbf{Remark.}
Lemma~\ref{lem:frobenius} can be viewed as a matrix-valued extension of Parseval’s identity in the Laplacian spectral domain. It relies on the orthonormality of the Laplacian eigenbasis and the fact that centering removes the component along the constant eigenvector. Analogous scalar results are well established in spectral graph theory and graph signal processing~\cite{Chung1997,shuman2013emerging}.

\begin{lemma}
\label{lem:vectorcase}
\textit{Let \(G\) be a graph with \(n\) vertices, Laplacian
\(L\in\mathbb{R}^{n\times n}\) and eigen-pairs
\(\{(\lambda_i,u_i)\}_{i=1}^{n}\) ordered
\(0=\lambda_{1}\le\lambda_{2}\le\dots\le\lambda_{n}\).
Stack the vertex features \(v_i\in\mathbb{R}^{m}\) row-wise in the
matrix \(\bar V=[\,v_{1}^{\top}\dots v_{n}^{\top}]^{\top}
\in\mathbb{R}^{n\times m}\)
and expand it in that eigenbasis}
\[
   \bar V
   =\sum_{i=1}^{n} u_i\,w_i^{\!\top},
   \qquad
   w_i\in\mathbb{R}^{m}.
\]
\textit{If the features are centered
\(\bigl(\sum_{i=1}^{n}v_i=0_m\bigr)\), then \(w_1=0_m\) and}
\[
  \sum_{(i,j)\in E}\!\|v_i-v_j\|_2^{2}
  =\operatorname{Tr}(\bar V^{\top}L\bar V)
  =\sum_{i=2}^{n}\lambda_i\,\|w_i\|_2^{2}.
\]
\end{lemma}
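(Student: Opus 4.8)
The plan is to assemble the three assertions of the statement by invoking the lemmas already established and then carrying out a single spectral computation. The first equality, $\sum_{(i,j)\in E}\|v_i-v_j\|_2^{2}=\operatorname{Tr}(\bar V^{\top}L\bar V)$, is exactly Lemma~\ref{lem:dirichlet} applied to the feature matrix $\bar V$ (reading the edge sum as one term per undirected edge, consistent with that lemma), so no new work is required there. The vanishing of the leading coefficient, $w_1=0_m$, is precisely the conclusion of Lemma~\ref{lem:no-constant-mode}, which combines the centering hypothesis $\sum_i v_i=0_m$ with the fact, from Lemma~\ref{lem:first-eigenpair}, that $u_1\propto\mathbf{1}$. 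Hence the only genuinely new content is the second equality, $\operatorname{Tr}(\bar V^{\top}L\bar V)=\sum_{i=2}^{n}\lambda_i\|w_i\|_2^{2}$.

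To establish that, I would substitute the eigen-expansion $\bar V=\sum_i u_i w_i^{\top}$ and use $Lu_i=\lambda_i u_i$ to obtain $L\bar V=\sum_i \lambda_i u_i w_i^{\top}$. Transposing gives $\bar V^{\top}=\sum_j w_j u_j^{\top}$, so that
\[
\bar V^{\top}L\bar V=\sum_{i,j}\lambda_i\,w_j\,(u_j^{\top}u_i)\,w_i^{\top}.
\]
The key simplification is orthonormality of the eigenbasis, $u_j^{\top}u_i=\delta_{ij}$, which collapses the double sum to $\sum_i \lambda_i w_i w_i^{\top}$. Taking the trace and using $\operatorname{Tr}(w_i w_i^{\top})=\|w_i\|_2^{2}$ then yields $\operatorname{Tr}(\bar V^{\top}L\bar V)=\sum_{i=1}^{n}\lambda_i\|w_i\|_2^{2}$.

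Finally I would drop the $i=1$ summand for two independent reasons: $\lambda_1=0$ by Lemma~\ref{lem:first-eigenpair}, and $w_1=0_m$ by Lemma~\ref{lem:no-constant-mode}; either alone removes that term, leaving $\sum_{i=2}^{n}\lambda_i\|w_i\|_2^{2}$ and completing the chain of equalities. The only step demanding care — hardly an obstacle — is the bookkeeping in the matrix-valued double sum: one must recognize that the scalars $u_j^{\top}u_i$ pass freely through the vectors $w_j$ and $w_i^{\top}$, and that the off-diagonal terms vanish purely by orthogonality of the eigenvectors, not by any property of the features. Because the whole argument is a matrix-valued Parseval-type computation built directly on the assembled lemmas, no delicate estimate is needed and the result follows as an essentially immediate corollary.
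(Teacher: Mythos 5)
Your proposal is correct and follows essentially the same route as the paper's proof in Appendix A.9: invoke Lemma~\ref{lem:dirichlet} for the first equality, Lemma~\ref{lem:no-constant-mode} for $w_1=0_m$, and then expand $\operatorname{Tr}(\bar V^{\top}L\bar V)$ in the orthonormal eigenbasis using $u_i^{\top}Lu_j=\lambda_j\delta_{ij}$. The only (immaterial) difference is that you carry the $i=1$ term through the computation and discard it at the end via $\lambda_1=0$, whereas the paper restricts to $i\ge 2$ from the outset.
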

The proof is provided in Appendix A.9.

\textbf{Remark.}
Lemma~\ref{lem:vectorcase} extends the classical spectral decomposition of the Laplacian quadratic form to matrix-valued signals. In the scalar case, this identity expresses the Dirichlet energy as a weighted sum of squared spectral coefficients, with weights given by the Laplacian eigenvalues. See~\cite[Chapter~1]{Chung1997} and~\cite{shuman2013emerging} for foundational treatments in spectral graph theory and graph signal processing.


\begin{theorem}[Fiedler Bound for Minimal Dirichlet Energy]
\label{thm:fiedler-min}
Let \(G=(V,E)\) have Laplacian
\(L\in\mathbb{R}^{n\times n}\) with eigenvalues
\(0=\lambda_{1}\le\lambda_{2}\le\dots\le\lambda_{n}\)
(the eigenvalue \(\lambda_{2}\) is the \emph{Fiedler value}).
For vertex features \(v_i\in\mathbb{R}^{m}\) satisfying
\[
  \sum_{i=1}^{n}\|v_i\|_2^{2}=n,
  \qquad
  \sum_{i=1}^{n}v_i=0_m ,
\]
we always have
\[
  \sum_{(i,j)\in E}\|v_i-v_j\|_2^{2}\;\;\ge\; n\,\lambda_{2},
\]
and the lower bound \(n\lambda_{2}\) (the \emph{Fiedler bound}) is
attained iff all energy lies in the Fiedler mode, i.e.
\[
   w_2=\sqrt{n}\,q,\ q\in\mathbb{R}^{m},\qquad 
   w_3=\dots=w_n=0_m ,
\]
where the \(w_i\) are the coefficients in the expansion
\(\bar V=\sum_{i=2}^{n}u_i w_i^{\!\top}\).
\end{theorem}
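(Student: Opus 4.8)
The plan is to derive the bound as an almost immediate corollary of the two preceding lemmas, so the argument is essentially a matrix-valued Rayleigh-quotient estimate dressed in the spectral notation already established. First I would invoke Lemma~\ref{lem:vectorcase} to rewrite the Dirichlet energy exactly as the spectral sum $\sum_{(i,j)\in E}\|v_i-v_j\|_2^2=\sum_{i=2}^{n}\lambda_i\|w_i\|_2^2$, noting that the constant mode has already dropped out ($w_1=0_m$) because the features are centered. This converts the combinatorial edge-sum on the left into a weighted sum of squared spectral coefficients, which is the form in which the eigenvalue ordering can be exploited.

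Next I would bound each coefficient's weight from below. Because the eigenvalues are ordered $\lambda_2\le\lambda_3\le\dots\le\lambda_n$, every index $i\ge2$ satisfies $\lambda_i\ge\lambda_2$, so $\sum_{i=2}^{n}\lambda_i\|w_i\|_2^2\ge\lambda_2\sum_{i=2}^{n}\|w_i\|_2^2$. The remaining sum is controlled by Lemma~\ref{lem:frobenius}, which (again using centering) gives $\sum_{i=2}^{n}\|w_i\|_2^2=\sum_{i=1}^{n}\|v_i\|_2^2$; the normalization constraint $\sum_{i=1}^{n}\|v_i\|_2^2=n$ then turns the right-hand side into exactly $n\lambda_2$. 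Chaining these estimates yields the claimed inequality $\sum_{(i,j)\in E}\|v_i-v_j\|_2^2\ge n\lambda_2$.

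The equality analysis is where the real care is needed. Equality in the step $\sum_{i\ge2}\lambda_i\|w_i\|_2^2\ge\lambda_2\sum_{i\ge2}\|w_i\|_2^2$ holds iff $(\lambda_i-\lambda_2)\|w_i\|_2^2=0$ for every $i\ge2$, i.e. all spectral energy must reside in eigenmodes whose eigenvalue equals $\lambda_2$. The clean characterization stated in the theorem---namely $w_2=\sqrt{n}\,q$ (with $q$ forced to be a unit vector by the normalization) and $w_3=\dots=w_n=0_m$---is precisely correct when the Fiedler value is \emph{simple}, i.e. $\lambda_2<\lambda_3$, so that every index $i\ge3$ has $\lambda_i>\lambda_2$ and is therefore forced to vanish. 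The main obstacle I anticipate is thus not the inequality itself but making this simplicity hypothesis explicit: if $\lambda_2$ has multiplicity greater than one, the minimizer set is an entire subspace (energy may be shared among all eigenvectors in the $\lambda_2$-eigenspace), and the ``iff'' would need to be restated. I would either add the standing assumption that $\lambda_2$ is simple (natural, since the Fiedler vector is then unique up to sign and scale) or else rephrase the equality condition as ``$w_i=0_m$ for all $i$ with $\lambda_i>\lambda_2$.'' The closing bookkeeping step, confirming $\|q\|_2=1$ from $\|w_2\|_2^2=n$, is routine.
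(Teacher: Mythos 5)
Your proof follows essentially the same route as the paper's: rewrite the Dirichlet energy via Lemma~\ref{lem:vectorcase} as $\sum_{i\ge 2}\lambda_i\|w_i\|_2^2$, bound each weight by $\lambda_i\ge\lambda_2$, and close with the constraint $\sum_{i\ge2}\|w_i\|_2^2=n$ from Lemma~\ref{lem:frobenius}, followed by the same equality analysis. Your caveat about the multiplicity of $\lambda_2$ is well taken --- the paper's own uniqueness step silently asserts $\lambda_i>\lambda_2$ for all $i\ge3$, i.e., that the Fiedler value is simple, so on that point you are slightly more careful than the published argument.
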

See Appendix A.10 for the proof of Theorem 1.
It is important to note that the proposed theorem establishes a connection between the minimum feature distance and the Fiedler value, and comparing the actual feature distances with this minimum can give insight into the effects of adding GCN layers. If the empirical feature distribution is close to the minimum distance for a particular graph, then the graph features are smooth and GCN performance classification tasks should be good. On the other hand, adding layers to the GCN quickly lead to over-smoothing. 

While the variational characterization of the Fiedler value via the Rayleigh quotient is well known in spectral graph theory, our contribution lies in its reinterpretation as a lower bound on the feature variation achievable through Laplacian smoothing. We explicitly frame this as a constraint for GCN feature evolution and propose a normalized diagnostic score $(\rho_k)
$, which allows one to monitor how close a representation is to this theoretical limit. To our knowledge, this is the first application of the Fiedler bound as a dynamic smoothing indicator for GCN layer selection.
\paragraph{Feature–distance score.}
Theorem~\ref{thm:fiedler-min} shows that $n\lambda_{2}$—the
\emph{Fiedler bound}—is the \emph{smallest total feature distance}
any centered, unit-energy signal can achieve on the graph, so we use
$n\lambda_{2}$ as a \emph{gold standard for smoothness}.  For the hidden
representation $V^{(k)}$ produced after the $k$-th GCN layer, define
\begin{equation}
  \rho_{k}
  \;=\;
  \frac{\operatorname{Tr}\!\bigl((V^{(k)})^{\top} L V^{(k)}\bigr)}
       {n\lambda_{2}} .
  \label{eq:rho_k}
\end{equation}

\textbf{Before training ($k=0$).}
With $V^{(0)}\!=V_{\text{in}}$ we have
\begin{equation}
  \rho_{0}
  \;=\;
  \frac{\operatorname{Tr}\!\bigl(V_{\text{in}}^{\top} L V_{\text{in}}\bigr)}
       {n\lambda_{2}} .
  \label{eq:rho_0}
\end{equation}
A large $\rho_{0}$ (say $\rho_{0}\!\gg\!1$) indicates that the raw
features vary sharply across edges, leaving the network ample room to
\emph{smooth} them.

If $\rho_{k}$ stays $\gg 1$ after a few layers, additional depth can
legitimately reduce feature variation; when
$\rho_{k}\!\approx\!1$, further layers are likely to push the model
into the \emph{over-smoothing} regime.


\section{Experiments and Results}

This section describes experiments that seek to establish empirical evidence for the theoretical connection between the Fieldler value and GCN performance described in the previous section. In our experiments, we implemented the Graph Convolutional Network (GCN) architecture originally proposed by Kipf et al.~\cite{kipf2016semi}, closely following their formulation and training methodology.
Our current formulation assumes that label similarity is correlated with topological closeness (homophily), which underlies the use of Laplacian smoothing.

We evaluate our method on two real-world citation network datasets Cora and CiteSeer~\cite{yang2016revisiting}—as well as the PolBlogs dataset, originally introduced by Adamic and Glance~\cite{adamic2005political} and accessed via the Network Repository~\cite{rossi2015network}.
We focus on the citation networks Cora and Citeseer,  
the same benchmarks used in several recent over‑smoothing studies e.g.\ the  
theoretical analysis of mean‑aggregation smoothing by Keriven at el~\cite{keriven2022not}.

Additionally, we also used synthetic datasets.
Nodes in the synthetic datasets belong to one of three classes, with the node features being drawn from one of three normal distributions, each paired with a class.  The means and variances of those normals are as follows: (250, 50), (100, 90), and (400, 200).
Synthetic graphs enable controlled variation of structural parameters such as edge density and connectedness which in turn affect the spectral properties of the Laplacian. Unlike real-world graphs, where many factors (e.g., label distribution, feature heterogeneity) may co-vary, these graphs allow us to isolate the specific influence of algebraic connectivity on GCN performance.
Real-world graphs can have multiple connected components.  The Fielder value, $\lambda_2$, in such cases is 0.  To addrress this we compute $\lambda_2$ for each component and aggregate them using a weighted average, where the weights are the ratios of the number of nodes in each component to the total number of nodes in the graph.  In this way, larger connected components have greater influence on the aggregate Fiedler value that will be used to make predictions about GCN performance.
The number of nodes, edges, and features in the synthetic datasets were varied, and the values of $\lambda_2$ were calculated.  Then GCNs were trained with up to five layers for node classification.
\begin{figure*}[t]
\centering
\begin{minipage}[b]{0.23\textwidth}
    \centering
    \includegraphics[width=\linewidth]{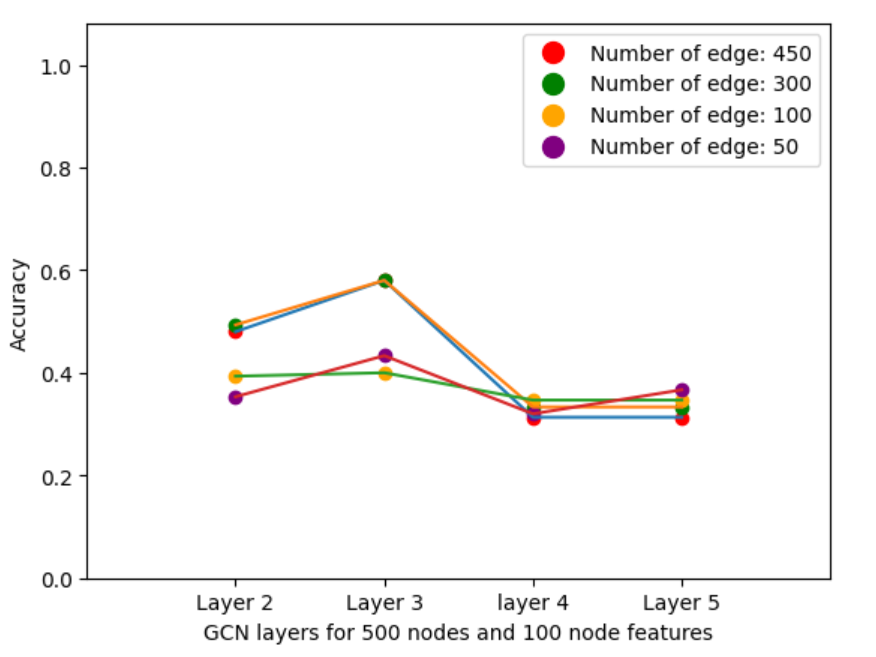}
    \caption*{(a)}
\end{minipage}
\hspace{0.01\textwidth}
\begin{minipage}[b]{0.23\textwidth}
    \centering
    \includegraphics[width=\linewidth]{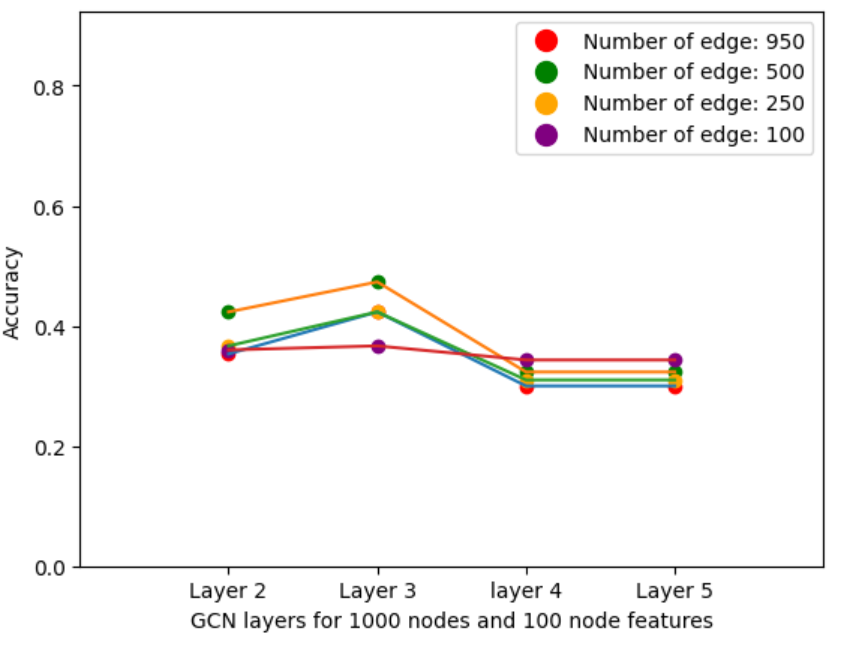}
    \caption*{(b)}
\end{minipage}
\hspace{0.01\textwidth}
\begin{minipage}[b]{0.23\textwidth}
    \centering
    \includegraphics[width=\linewidth]{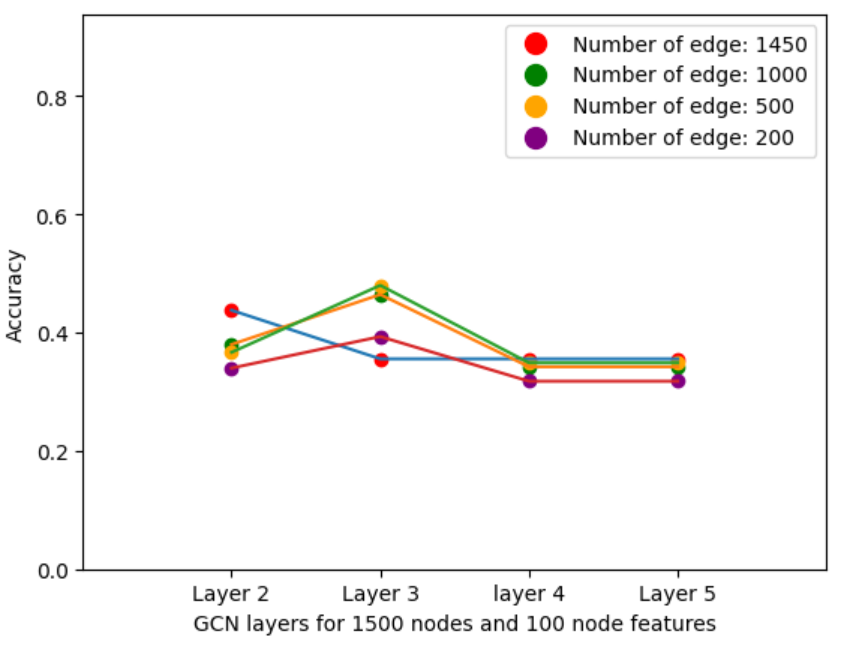}
    \caption*{(c)}
\end{minipage}
\hspace{0.01\textwidth}
\begin{minipage}[b]{0.23\textwidth}
    \centering
    \includegraphics[width=\linewidth]{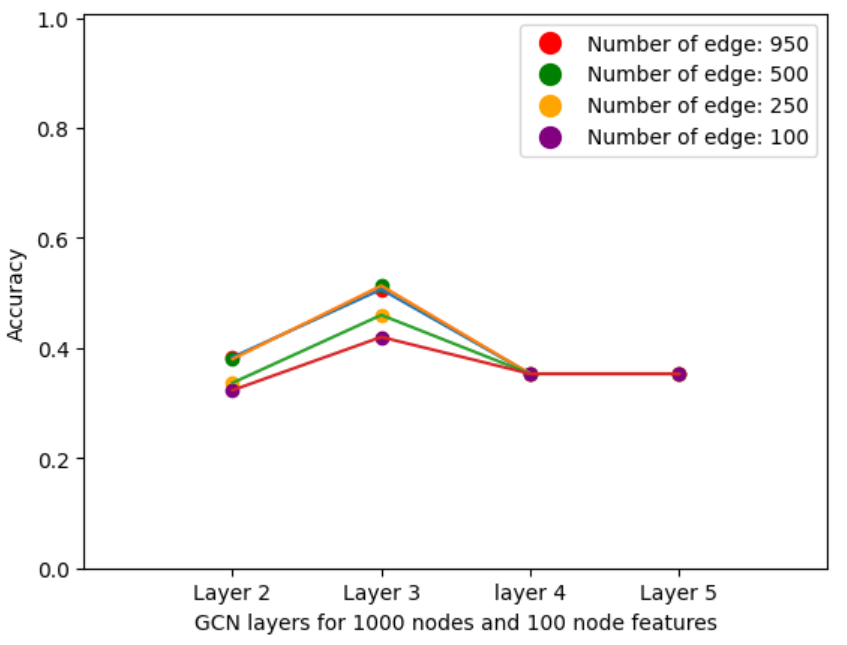}
    \caption*{(d)}
\end{minipage}
\caption{Accuracy of GCN layers for four graph datasets in various conditions.}
\label{fig:gcn_accuracy}
\end{figure*}

\Cref{fig:gcn_accuracy}(a) demonstrates the node classification accuracy of Graph Convolutional Networks (GCNs) across graphs with varying edge densities. Specifically, four graphs, each consisting of 500 nodes and 100-dimensional node features but differing in the number of edges, are analyzed. The horizontal axis represents the number of GCN layers, and the vertical axis indicates the classification accuracy. In particular, regardless of edge count, three-layer GCNs consistently achieve the highest accuracy. In \Cref{fig:gcn_accuracy}(a) the graphs are arranged along the horizontal axis in descending order according to their edge counts illustrating how increased connectivity impacts these metrics. Fiedler values, again ordered by edge count.

\begin{figure}[t]
\centering
\begin{minipage}[b]{0.45\linewidth}
    \centering
    \includegraphics[width=\linewidth]{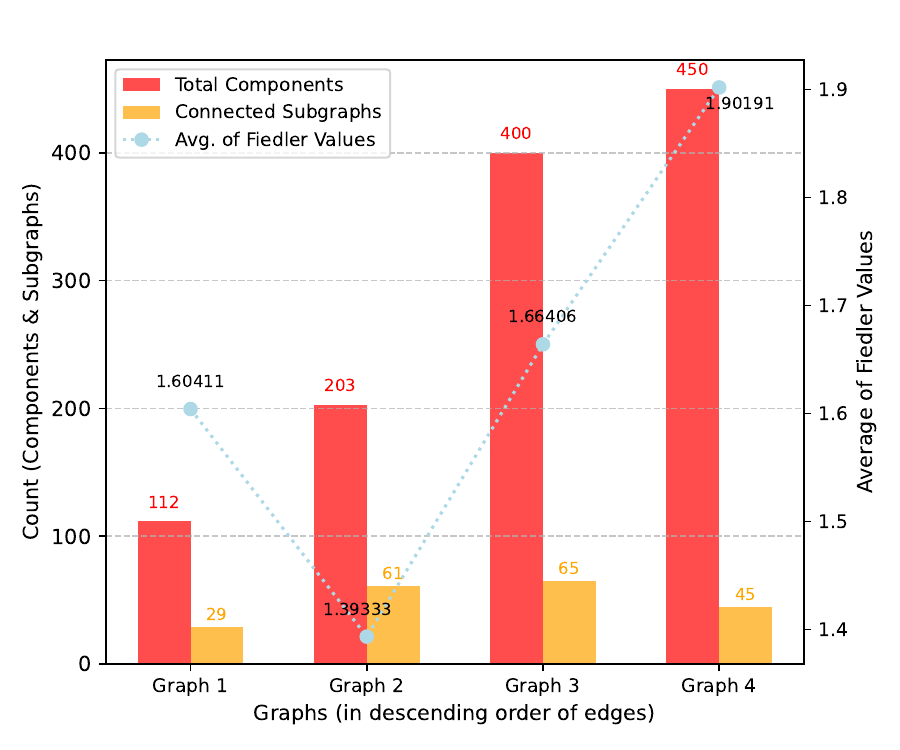}
    \caption*{(a)}
\end{minipage}
\hfill
\begin{minipage}[b]{0.45\linewidth}
    \centering
    \includegraphics[width=\linewidth]{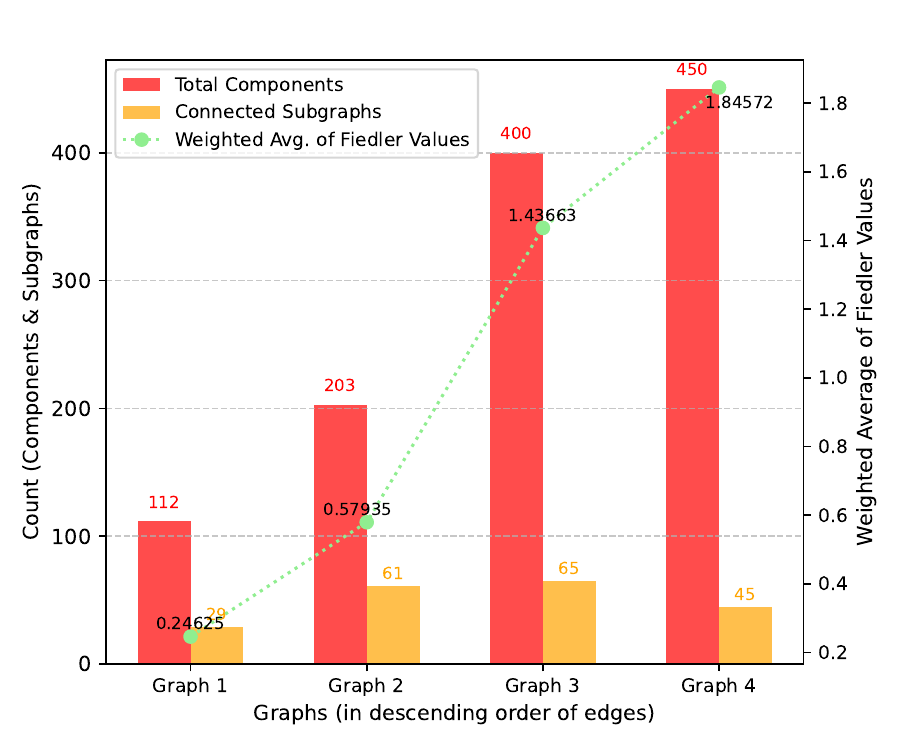}
    \caption*{(b)}
\end{minipage}
\caption{Structural metrics derived from the four 500‑node graphs used in Fig.\,\ref{fig:gcn_accuracy}(a). (a) Average of Fiedler values (b) Weighted average of Fiedler values.}
\label{fig:fiedler}
\end{figure}

In \Cref{fig:fiedler}(a), a comparison is depicted among the graphs of \Cref{fig:gcn_accuracy}(a) based on the average of the Fiedler values, total components of the graph, and total number of connected subgraphs. Similarly, \Cref{fig:fiedler}(b) further refines this analysis by considering the weighted average of Fiedler values, from \Cref{fig:gcn_accuracy}(a)  in descending order based on the number of edges in the graph.
The weighted average calculation involves multiplying each component's Fiedler value by its corresponding number of nodes, summing these products, and then dividing by the total number of nodes. This metric captures a nuanced perspective of the graph's spectral connectivity relative to its structural complexity.

\begin{figure}[t]
\centering
\begin{minipage}[b]{0.45\linewidth}
    \centering
    \includegraphics[width=\linewidth]{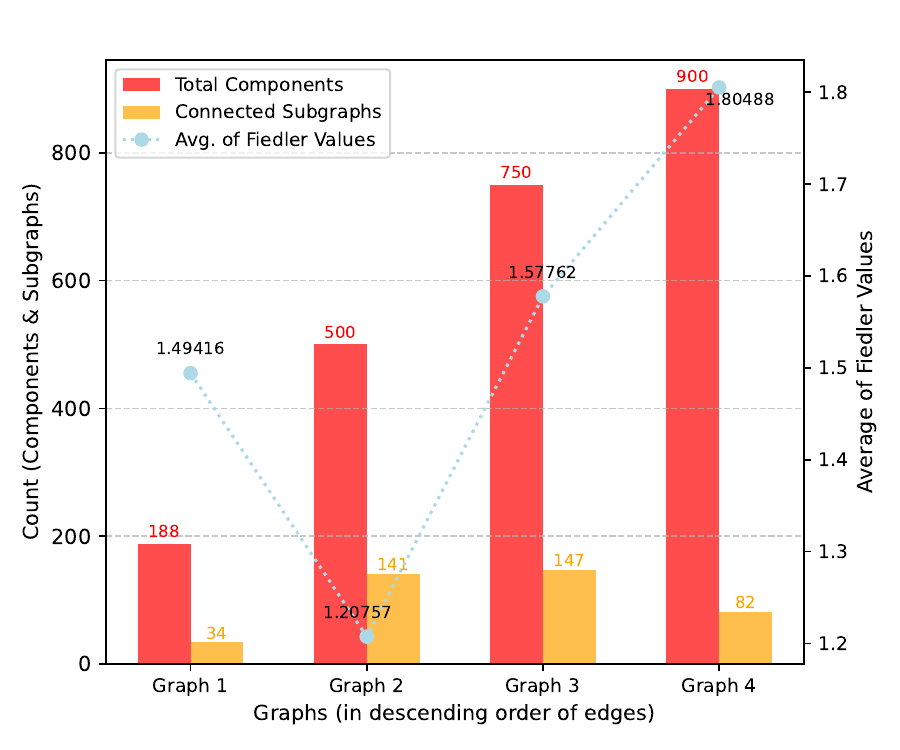}
    \caption*{(a)}
\end{minipage}
\hfill
\begin{minipage}[b]{0.45\linewidth}
    \centering
    \includegraphics[width=\linewidth]{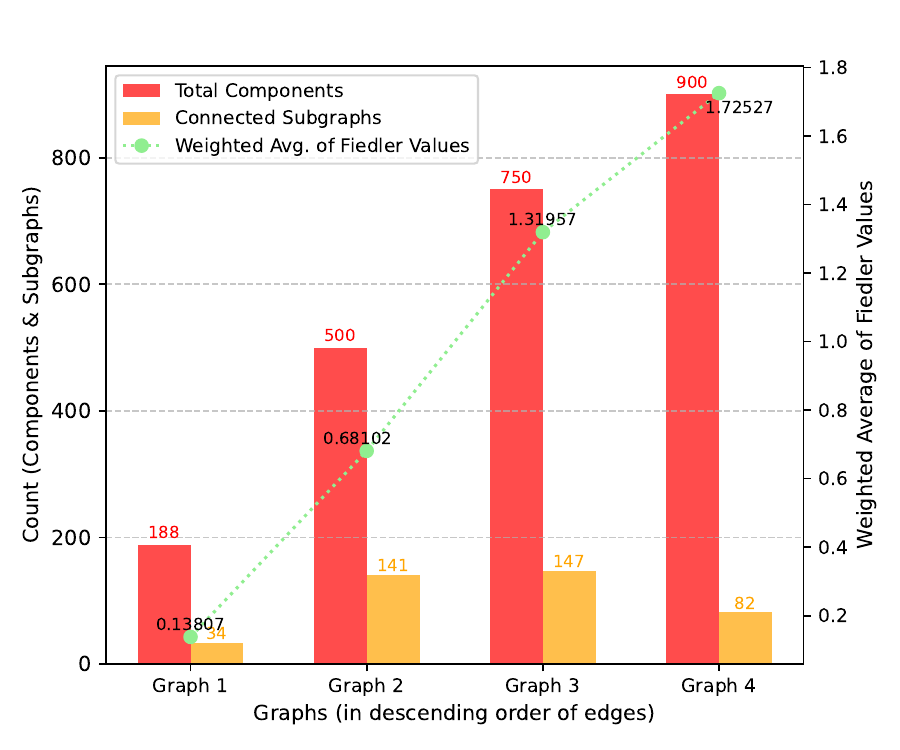}
    \caption*{(b)}
\end{minipage}
\caption{Structural metrics derived from the four 1000‑node graphs used in Fig.\,\ref{fig:gcn_accuracy}(b). (a) Average of Fiedler values (b) Weighted average of Fiedler values.}
\label{fig4}
\end{figure}

\Cref{fig4}(a) depicts the total number of components, total number of connected subgraphs, and an average of Fiedler values of \Cref{fig:gcn_accuracy}(b). The data of the graphs are also presented in descending order based on the number of edges, highlighting how these structural attributes vary systematically with graph connectivity. In the same way, \Cref{fig4}(b) further elaborates by displaying the weighted average Fiedler values of \Cref{fig:gcn_accuracy}(b). This metric emphasizes how spectral connectivity scales with the structural composition of the graph.

\begin{figure}[t]
\centering
\begin{minipage}[b]{0.45\linewidth}
    \centering
    \includegraphics[width=\linewidth]{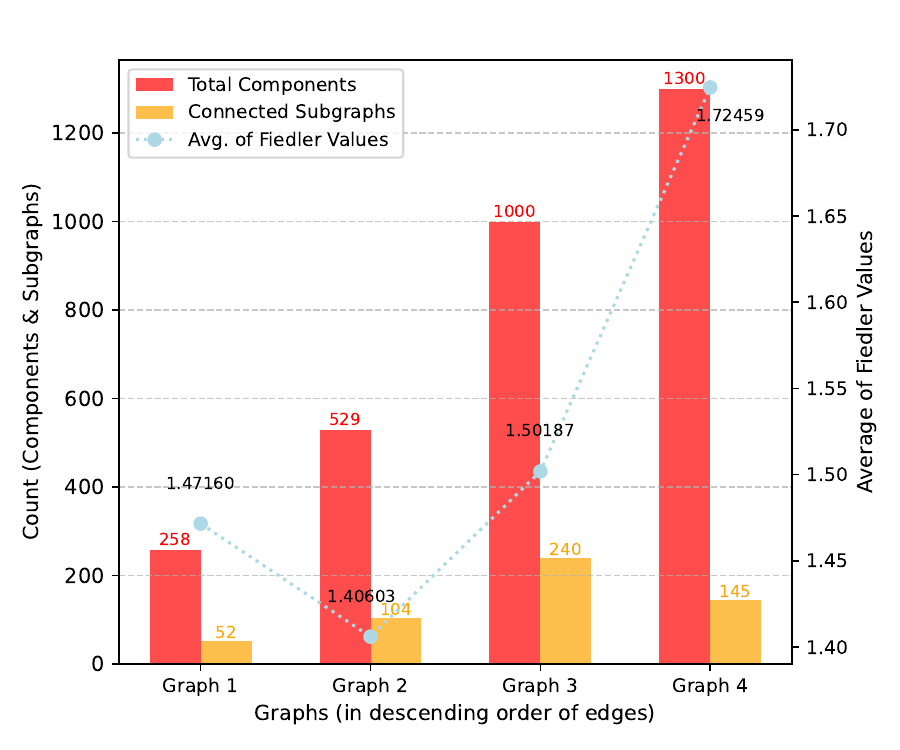}
    \caption*{(a)}
\end{minipage}
\hfill
\begin{minipage}[b]{0.45\linewidth}
    \centering
    \includegraphics[width=\linewidth]{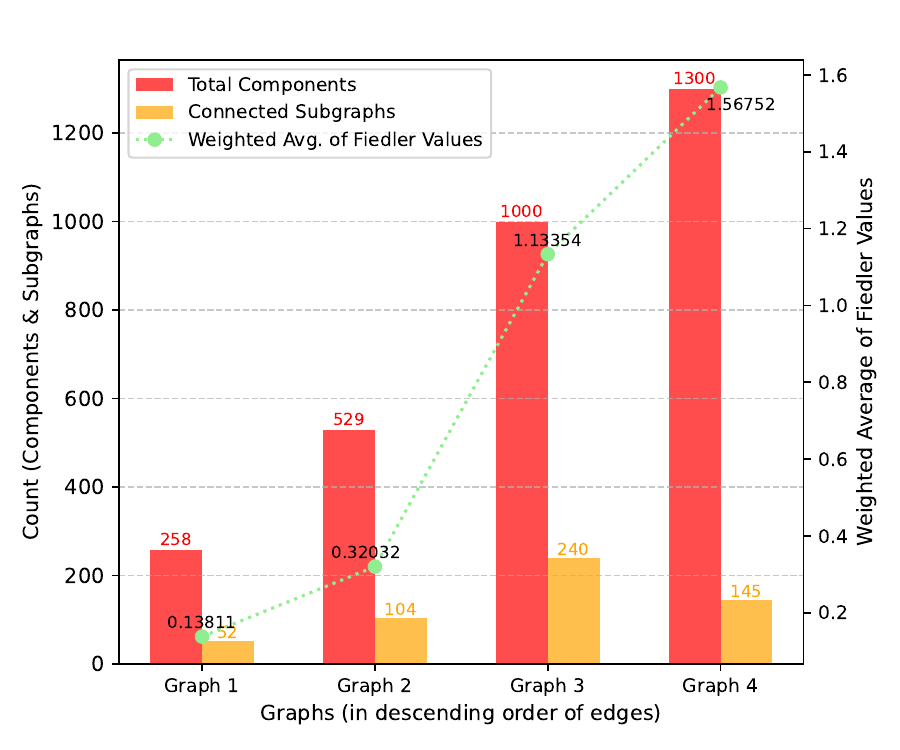}
    \caption*{(b)}
\end{minipage}
\caption{Structural metrics derived from the four 1500‑node graphs used in \Cref{fig:gcn_accuracy}(c). (a) Average of Fiedler values (b) Weighted average of Fiedler values.}
\label{fig6}
\end{figure}

\Cref{fig:gcn_accuracy}(c) illustrates the accuracy of GCN for four graph datasets of 1500 nodes and 100 node features. Classification accuracy decreases at the extremes of edge connectivity, being lower for both densely and sparsely connected graphs.

\Cref{fig6}(a) illustrates the average of Fiedler values, number of total components, and connected subgraphs of \Cref{fig:gcn_accuracy}(c). A comparison is made among the four graphs in descending order based on the number of edges. Similarly, \Cref{fig6}(b) represents the weighted average of the Fiedler values of \Cref{fig:gcn_accuracy}(c).
\begin{figure}[t]
\centering
\begin{minipage}[b]{0.45\linewidth}
    \centering
    \includegraphics[width=\linewidth]{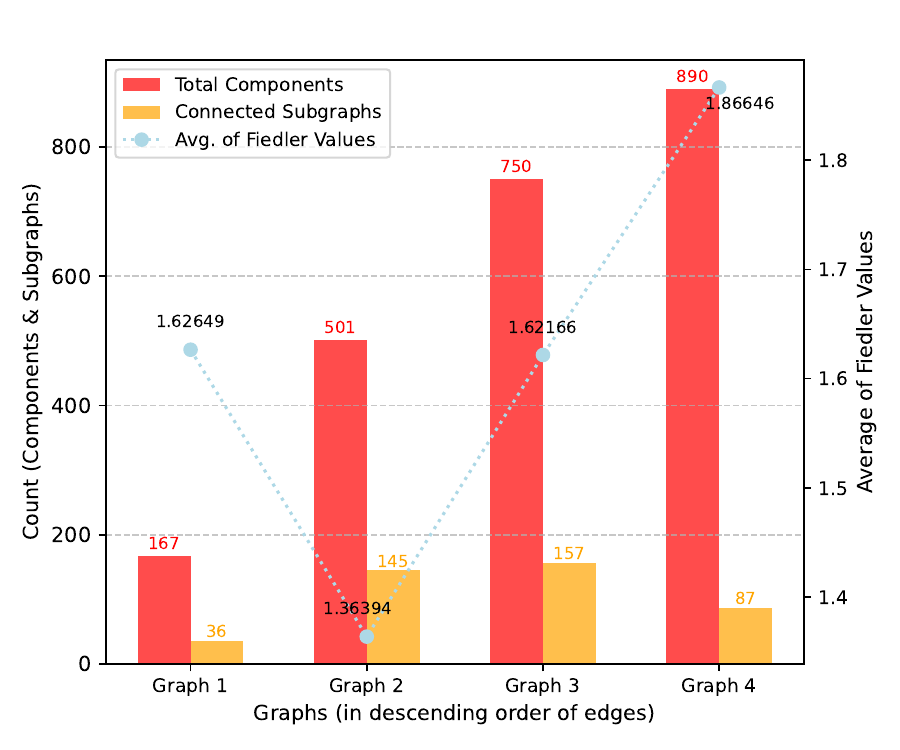}
    \caption*{(a)}
\end{minipage}
\hfill
\begin{minipage}[b]{0.45\linewidth}
    \centering
    \includegraphics[width=\linewidth]{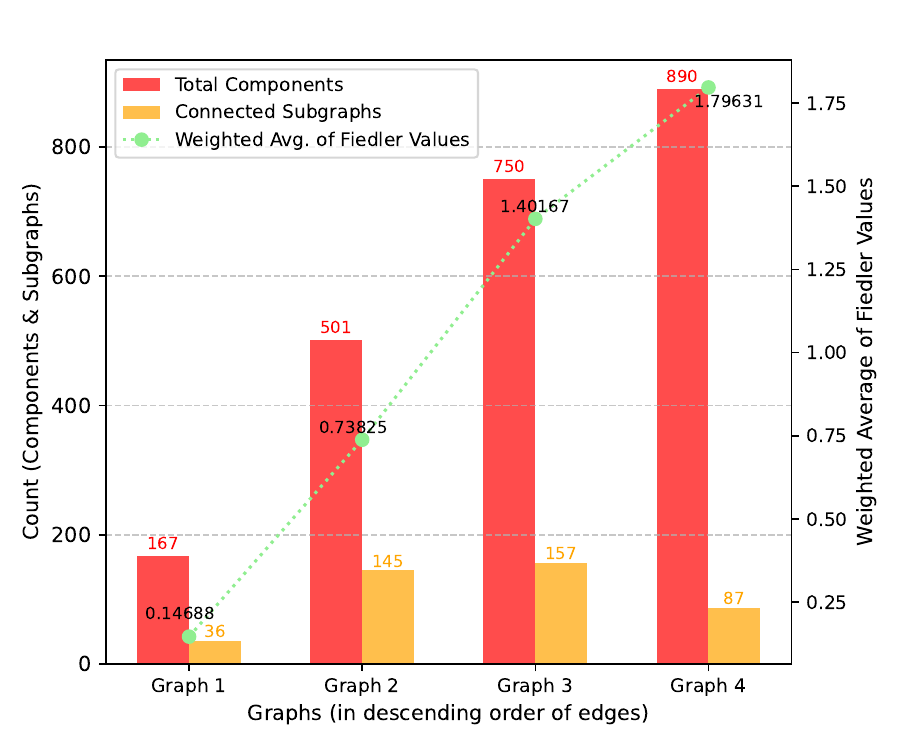}
    \caption*{(b)}
\end{minipage}
\caption{Structural metrics derived from the four 1000‑node graphs used in Fig.\,\ref{fig:gcn_accuracy}(d). (a) Average of Fiedler values (b) Weighted average of Fiedler values.}
\label{fig8}
\end{figure}

\Cref{fig:gcn_accuracy}(d) presents node classification accuracy across multiple GCN layers on a single initial graph comprising 1000 nodes, 100-dimensional node features, and initially 950 edges. To investigate how connectivity influences accuracy and spectral properties, edges were gradually and randomly removed (from 950 down to 500, 250, and 100). The horizontal axis denotes the number of GCN layers, while the vertical axis indicates accuracy. Consistent with previous observations, accuracy peaks at three layers regardless of the graph's connectivity, highlighting an optimal GCN depth under these conditions. The data are organized in descending order by edge count, clearly showing how connectivity reduction affects these structural measures.

\Cref{fig8}(a) and \Cref{fig8}(b) depict the average of Fiedler values and also the weighted average of the Fiedler values of \Cref{fig:gcn_accuracy}(d). Comparison is made in descending order based on the number of edges. This weighted metric further clarifies how decreasing graph connectivity directly affects spectral characteristics critical to GCN performance.
Overall, these figures collectively emphasize how graph structural properties, particularly as captured by Fiedler values and connectivity metrics, correlate with GCN performance and optimal model depth.

Table \ref{table_real_data} depicts the performance of GCN in Cora CiteSeer and Polblogs datasets. The accuracy of applying GCN is dropped immediately after applying 3 convolution layers for Cora and CiteSeer dataset and for Polblogs dataset it takes 5 layers. We have preprocessed the PolBlogs dataset by removing communities that have only one member to successfully perform the node classification task. After preprocessing, the number of nodes is 1224 and the classes are 11. Since the dataset does not include node features, we assign each node a unique one-hot encoded vector as its feature representation, resulting in an identity matrix of size \(N \times N \), where \(N\) is the number of nodes in the preprocessed graph.

Table \ref{sample-table3} illustrates the average Fiedler values and weighted average Fiedler values of Cora, CiteSeer and Polblogs datasets.

\begin{table*}[t]
\centering
\begin{tabular}{lrrrrrrr}
\toprule
\textbf{Dataset} & \textbf{Nodes} & \textbf{Feat.} & \textbf{Edges} & \textbf{Acc@2} & \textbf{Acc@3} & \textbf{Acc@4} & \textbf{Acc@5} \\
\midrule
Cora     & 2,708 & 1,433 & 10,556 & 0.8150 & 0.7910 & 0.7410     & 0.6730     \\
CiteSeer & 3,327 & 3,703 & 9,104  & 0.7120 & 0.6640 & --     & 0.6030    \\
Polblogs & 1,490 & 1,224 & 16,718 & 0.9565 & 0.9592 & 0.9620 & 0.9484 \\
\bottomrule
\end{tabular}
\caption{Accuracy of GCN with varying convolution layers}
\label{table_real_data}
\end{table*}

\begin{table}[t]
\centering
\begin{tabular}{lcc}
\toprule
\textbf{Dataset} & \textbf{Avg. of FV} & \textbf{Weighted avg. of FV} \\
\midrule
Cora     & 1.62833 & 0.1024  \\
CiteSeer & 1.54106 & 0.4205  \\
Polblogs & 1.50001 & 0.04615 \\
\bottomrule
\end{tabular}
\caption{Average of Fiedler values and weighted average of Fiedler values of real-world datasets}
\label{sample-table3}
\end{table}

Our experiments reveal an optimal range of Fiedler values associated with strong GCN performance. Specifically, we observe diminished classification performance when algebraic connectivity is either too low or too high, even with identical numbers of nodes and features. Empirically, average Fiedler values exceeding approximately 1.4 consistently correlate with reduced performance. In contrast, we find the strongest performance when the weighted average of Fiedler values falls between approximately 0.5 and 1.2. This trend holds consistently across multiple experiments, indicating suboptimal performance for both excessively dense (high edge count) and excessively sparse (low edge count) graph structures. This enables the use of $\lambda2$ as a pre-training diagnostic to inform architecture choices and potentially reduce expensive hyperparameter search
\section{Conclusion}

Graph Convolutional Networks (GCNs) face well-documented performance limitations, particularly in relation to the structural properties of their input graphs. This work highlights the Fiedler value as a theoretically grounded and empirically informative metric for assessing and predicting GCN performance. By quantifying algebraic connectivity, the Fiedler value offers practical guidance on whether a given graph falls within a range where shallow or deep GCNs—or GCNs at all—are likely to be effective. Our findings suggest that incorporating spectral characteristics, especially the Fiedler value, into model selection and experimental design can improve performance across diverse datasets.

The broader impacts of this work include reducing the time to create effective graph algorithms by choosing an appropriate number of convolution layers quickly, resulting in fewer training runs and a smaller carbon footprint.  This study is limited to node classification tasks on graphs with multiple components. Future work will explore broader empirical validation, including alternative tasks and metrics such as feature distance scores, to further elucidate the role of spectral properties in GCN performance.

\bibliography{aaai2026}


\appendix

\section*{A. Appendix}

\subsection{A.1 Notation}
Notations are depicted in Table \ref{tab:notation}
\begin{table*}[t]
\centering
\small  
\begin{tabular}{@{}lll@{}}
\toprule
\textbf{Symbol} & \textbf{Meaning} & \textbf{Shape} \\
\midrule
\(\mathcal V\) & vertex set & – \\
\(n:=|\mathcal V|\) & number of vertices & – \\
\(\mathbf A\) & adjacency matrix & \(n\times n\) \\
\(\mathbf D\) & degree matrix & \(n\times n\) \\
\(\mathbf L=\mathbf D-\mathbf A\) & combinatorial Laplacian & \(n\times n\) \\
\(\mathbf X=[x_1^\top\,\dots\,x_n^\top]^\top\) & input feature matrix & \(n\times m\) \\
\(V^{(k)}\) & hidden feature matrix after \(k\)-th GCN layer & \(n\times m_k\) \\
\(u_i,\lambda_i\) & \(i\)-th eigenpair of \(\mathbf L\) & \(n\), scalar \\
\(w_i\) & spectral coefficient vectors & \(m\) \\
\(\delta_{ij}\) & Kronecker delta (\(=1\) if \(i=j\)) & – \\
\(\rho_k\) & normalised feature–distance score (Eq.\,\ref{eq:rho_k}) & scalar \\
\bottomrule
\end{tabular}
\caption{Glossary of symbols used throughout the paper.}
\label{tab:notation}
\end{table*}

\subsection{A.2 Layer-wise Propagation rule of GCN}
\begin{equation}
H^{(l+1)} = \sigma \left( \tilde{D}^{-\frac{1}{2}} \tilde{A} \tilde{D}^{-\frac{1}{2}} H^{(l)} W^{(l)} \right)
\end{equation} 
where:
\begin{itemize}
  \item $H^{(l)}$ is the feature matrix of layer $l$
  \item $W^{(l)}$ is the learnable weight matrix for layer $l$ 
  \item $\tilde{A} = A + I_N$ is the adjacency matrix with added self-loop
  \item $\tilde{D}$ is the degree matrix corresponding to $\tilde{A}$
  \item $\sigma$ is a non-linear activation function
\end{itemize}
$H^{(0)}$ is the input feature matrix, and the output of the final layer $H^{(L)}$ (after $L$ layers) is used for tasks like classification.
In the above formula, $ \tilde{D}^{-\frac{1}{2}} \tilde{A} \tilde{D}^{-\frac{1}{2}}$ is derived from the normalized graph Laplacian \cite{kipf2016semi}.
\subsection{A.3 Predicting Expander-like Graph}
A larger Fiedler value indicates an expander-like graph \cite{kohayakawa2003discrepancy}, and GCNs are known to suffer from information loss in such  graphs \cite{oono2019graph}.
For an arbitrary vertex \( \omega \in V(G) \) and a non-negative whole number $k$, let \( R_k = R_k(\omega) \) be the set of vertices of \( G \) which are at distance at most \( l \) from \( \omega \) and \( r_k = |R_k| \). $r_k$ has exponential growth, i.e. \( r_k \geq \alpha^k \) where \( \alpha \) is a constant bounded below as a function of the Fiedler value (\( \lambda_2 \)) \cite{mohar1991eigenvalues}.  Larger Fiedler values indicate that exponentially more information reaches nodes with additional rounds of convolution, increasing the chances of over-squashing.

\subsection{A.4 Under-Reaching and Over-Squashing}
The diameter of a graph is the longest shortest path between any two nodes in the graph, and it has a profound
effect on GCN performance. Computing the exact diameter of a large graph is computationally intensive and often impractical, making diameter approximation a more feasible approach.
The Fiedler value can be used to estimate the diameter of a real‑world graph. This assessment may assist in mitigating issues related to under‑reaching or over‑squashing. When a node does not receive needed information about other nodes that are far away, this is called \emph{under‑reaching} \cite{alon2020bottleneck}. Recall that if we apply $l$ convolution layers in a GCN, nodes that are separated by a shortest path of more than $l$ hops will not be ``aware'' of each other. On the other hand, GCNs suffer from \emph{over‑squashing} when exponentially growing information about increasingly distal parts of the graph is forced into a fixed‑size vector. GCN models applied to graphs with large diameters often face over‑squashing \cite{alon2020bottleneck}. In both cases, knowing the diameter of a graph is useful to determine whether under‑reaching or over‑squashing might be a problem. Again, the Fiedler value can be used to bound the diameter of a graph $G$ of order $n$\cite{mohar1991eigenvalues}.

The \textbf{lower bound} on the diameter is:
\[
\operatorname{diam}(G) \;\ge\; \frac{4}{n\lambda_2}.
\]
and the \textbf{upper bound} is:
\[
\operatorname{diam}(G) \;\le\;  
2 \Bigl\lceil \sqrt{\frac{2\Delta}{\lambda_2}} \,\log_2 n \Bigr\rceil .
\]
Here $\Delta$ is the maximum node degree of graph $G$.

\subsection{A.5 Proof of Lemma 1}
\begin{proof}
Using the identity \( L = D - A \) and the cyclic property of the trace 
\[
\operatorname{Tr}(V^{\top} L V) = \operatorname{Tr}(V^{\top} D V) - \operatorname{Tr}(V^{\top} A V).
\]
Since the graph is undirected, each unordered edge \( \{i,j\} \in E \) appears exactly once, and \( D_{ii} = d_i = \sum_j A_{ij} \). Therefore,
\[
\operatorname{Tr}(V^{\top} D V) = \sum_{i=1}^{n} d_i \|v_i\|_2^2 = \sum_{\{i,j\} \in E} \left( \|v_i\|_2^2 + \|v_j\|_2^2 \right),
\]
and
\[
\operatorname{Tr}(V^{\top} A V) = \sum_{i,j} A_{ij} v_i^{\top} v_j = 2 \sum_{\{i,j\} \in E} v_i^{\top} v_j.
\]
Subtracting the two expressions gives:
\[
\operatorname{Tr}(V^{\top} L V) =
\sum_{\{i,j\} \in E} \left( \|v_i\|_2^2 + \|v_j\|_2^2 - 2 v_i^{\top} v_j \right)
\\
= \sum_{\{i,j\} \in E} \|v_i - v_j\|_2^2,
\]
which completes the proof.
\end{proof}

\subsection{A.6 Proof of Lemma 2}
\begin{proof}
By definition, \( L = D - A \), where \( D \) is the degree matrix and \( A \) is the adjacency matrix. Since \( G \) is undirected, each row of \( L \) sums to zero. That is, for all \( i \in \{1, \dots, n\} \), we have \( \sum_{j=1}^{n} L_{ij} = 0 \). It follows that \( L \mathbf{1} = 0 \), so \( \lambda_1 = 0 \) is an eigenvalue of \( L \), with eigenvector \( \mathbf{1} \). Normalizing \( \mathbf{1} \) yields the unit eigenvector \( u_1 = \tfrac{1}{\sqrt{n}} \mathbf{1} \).
\end{proof}
\subsection{A.7 Proof of Lemma 3}

\begin{proof}
The first Laplacian eigenvector is the normalized all-ones vector
\(u_{1}=\tfrac1{\sqrt n}\mathbf 1\) (see Lemma~\ref{lem:first-eigenpair}).
Left-multiplying \((\ast)\) by \(\mathbf 1^{\top}\) gives
\[
  \mathbf 1^{\top}\bar V
  \;=\;
  \sum_{i=1}^{n}\mathbf 1^{\top}u_i\,w_i^{\!\top}
  \;=\;
  \sqrt n\,w_{1}^{\!\top},
\]
since \(\mathbf 1^{\top}u_i = 0\) for \(i\ge2\) by orthogonality.
The centering condition implies \(\mathbf 1^{\top}\bar V = 0\), so
\(w_{1}^{\!\top}=0\) and thus 
\(w_{1}=0_m\).
\end{proof}

\subsection{ A.8 Proof of Lemma 4}
\begin{proof}
Centering implies \(\mathbf 1^{\top}\bar V=0_{1\times m}\); since
\(\mathbf 1^{\top}\bar u_i=0\) for \(i\ge2\) and
\(\mathbf 1^{\top}\bar u_1=\sqrt n\),
we must have \(w_1=0_m\).
Using the orthogonality \(\bar u_i^{\top}\bar u_j=\delta_{ij}\),
\begin{eqnarray*}
  \operatorname{Tr}(\bar V^{\top}\bar V)
  & = & 
  \operatorname{Tr}\!\Bigl(
    \bigl[\sum_{i=2}^{n} w_i\,\bar u_i^{\top}\bigr]
    \bigl[\sum_{j=2}^{n} \bar u_j\,w_j^{\top}\bigr]
  \Bigr)
  \\
  & = &
  \sum_{i=2}^{n}\operatorname{Tr}(w_i w_i^{\top})
  \\
  & = &
  \sum_{i=2}^{n}\|w_i\|_2^{2}.
\end{eqnarray*}

But \(\operatorname{Tr}(\bar V^{\top}\bar V)
      =\sum_{i=1}^{n}\|v_i\|_2^{2}\), completing the proof.
\end{proof}

\subsection{A.9 Proof of Lemma 5}

\begin{proof}
Centering forces \(w_1=0_m\) exactly as in
Lemma \ref{lem:no-constant-mode}; thus
\(\bar V=\sum_{i=2}^{n} u_i w_i^{\!\top}\).
Using cyclicity of the trace,
\begin{eqnarray*}
  \operatorname{Tr}(\bar V^{\top}L\bar V)
  &=&\operatorname{Tr}\!\Bigl(
      \Bigl[\sum_{i=2}^{n} w_i\,u_i^{\top}\Bigr]
      L
      \Bigl[\sum_{j=2}^{n} u_j\,w_j^{\!\top}\Bigr]
    \Bigr)
    \\
  &=&\sum_{i=2}^{n}\sum_{j=2}^{n}
     w_i^{\top}\underbrace{u_i^{\top}L u_j}_{=\;\lambda_j\delta_{ij}}w_j \\
  &=&\sum_{i=2}^{n}\lambda_i\,w_i^{\top}w_i
  \\
  &=&\sum_{i=2}^{n}\lambda_i\,\|w_i\|_2^{2}.
\end{eqnarray*}
The first equality uses the identity
\(\sum_{(i,j)\in E}\|v_i-v_j\|_2^{2}
     =\operatorname{Tr}(\bar V^{\top}L\bar V)\)
from Lemma \ref{lem:dirichlet}.
\end{proof}

\subsection{A.10 Proof of Theorem 1}
\begin{proof}[Proof of Theorem~\ref{thm:fiedler-min}]
\textbf{Step 1 : Rayleigh--Ritz expression for the objective.}
By Lemma~\ref{lem:vectorcase} the total feature distance can be written
as the quadratic form
\[
  \mathcal E(\bar V)
  \;=\;
  \sum_{(i,j)\in E}\!\|v_i-v_j\|_{2}^{2}
  \;=\;
  \sum_{i=2}^{n}\lambda_i\,\|w_i\|_2^{2},
  \tag{A}
\]
where the coefficient vectors \(w_i\in\mathbb{R}^{m}\) come from the
outer‑product expansion
\(\bar V=\sum_{i\ge2}u_i w_i^{\!\top}\).

\smallskip
\textbf{Step2: Constraint in the coefficient space.}
Lemma~\ref{lem:frobenius} (with \(w_1=0_m\), enforced by the centering
assumption) converts the Frobenius‑norm constraint into
\[
  \sum_{i=2}^{n}\|w_i\|_2^{2}
  \;=\;
  n.
  \tag{B}
\]

\smallskip
\textbf{Step3: Optimisation.}
Because the eigenvalues are ordered
\(0=\lambda_1<\lambda_2\le\lambda_3\le\dots\le\lambda_n\) and
the terms \(\|w_i\|_2^{2}\) are non‑negative,the Rayleigh–Ritz energy
\(\mathcal E(\bar V)=\sum_{i=2}^{n}\lambda_i\|w_i\|_2^{2}\)
is minimised, subject to the norm constraint
\(\sum_{i=2}^{n}\|w_i\|_2^{2}=n\),
by allocating the entire norm budget \(n\) to the smallest eigenvalue

\(\lambda_2\):
\[
  \|w_2\|_2^{2}=n,
  \qquad
  \|w_i\|_2^{2}=0
  \ \ (i\ge3).
\]
Setting \(w_2=\sqrt n\,q\) with any unit vector
\(q\in\mathbb{R}^{m}\) realises that allocation and yields
\[
  \mathcal E(\bar V_{\min})
  = \lambda_2\,\|w_2\|_2^{2}
  = n\,\lambda_2.
\]

\textbf{Step4: Uniqueness of the minimiser.}
If a feasible set \(\{w_i\}_{i=2}^{n}\) achieves the same value
\(n\lambda_2\), then necessarily
\(\sum_{i=2}^{n}(\lambda_i-\lambda_2)\|w_i\|_2^{2}=0\); because each
summand is non‑negative and \(\lambda_i>\lambda_2\) for \(i\ge3\), this
forces \(\|w_i\|_2^{2}=0\) for all \(i\ge3\).
Hence every minimiser must indeed place all energy in the Fiedler mode.
\end{proof}

\appendix

\makeatletter
\@ifundefined{isChecklistMainFile}{
  \newif\ifreproStandalone
  \reproStandalonetrue
}{
  \newif\ifreproStandalone
  \reproStandalonefalse
}
\makeatother

\ifreproStandalone
\documentclass[letterpaper]{article}
\usepackage[submission]{aaai2026}
\setlength{\pdfpagewidth}{8.5in}
\setlength{\pdfpageheight}{11in}
\usepackage{times}
\usepackage{helvet}
\usepackage{courier}
\usepackage{xcolor}
\frenchspacing

\begin{document}
\fi
\setlength{\leftmargini}{20pt}
\makeatletter\def\@listi{\leftmargin\leftmargini \topsep .5em \parsep .5em \itemsep .5em}
\def\@listii{\leftmargin\leftmarginii \labelwidth\leftmarginii \advance\labelwidth-\labelsep \topsep .4em \parsep .4em \itemsep .4em}
\def\@listiii{\leftmargin\leftmarginiii \labelwidth\leftmarginiii \advance\labelwidth-\labelsep \topsep .4em \parsep .4em \itemsep .4em}\makeatother

\setcounter{secnumdepth}{0}
\renewcommand\thesubsection{\arabic{subsection}}
\renewcommand\labelenumi{\thesubsection.\arabic{enumi}}

\newcounter{checksubsection}
\newcounter{checkitem}[checksubsection]

\newcommand{\checksubsection}[1]{%
  \refstepcounter{checksubsection}%
  \paragraph{\arabic{checksubsection}. #1}%
  \setcounter{checkitem}{0}%
}

\newcommand{\checkitem}{%
  \refstepcounter{checkitem}%
  \item[\arabic{checksubsection}.\arabic{checkitem}.]%
}
\newcommand{\question}[2]{\normalcolor\checkitem #1 #2 \color{blue}}
\newcommand{\ifyespoints}[1]{\makebox[0pt][l]{\hspace{-15pt}\normalcolor #1}}

\end{document}
\fi
\end{document}